\let\proof\@undefined
\let\endproof\@undefined
\pgfplotsset{compat=1.7}
\newtheorem{definition*}{Definition}
\newtheorem{assumption*}{Assumption}
\newtheorem{theorem*}{Theorem}
\newtheorem{lemma}{Lemma}
\newtheorem{problem}{Problem}
\newtheorem{proposition}{Proposition}
\newcommand{\blue}{
}
\begin{document}
\title{Optimal Partitioning of Non-Convex Environments \\ for Minimum Turn Coverage Planning}

\urldef{\ramesh}\url{m5ramesh@uwaterloo.ca}
\urldef{\smith}\url{stephen.smith@uwaterloo.ca}
\urldef{\fidan}\url{fidan@uwaterloo.ca}
\urldef{\imeson}\url{frank.imeson@avidbots.com}

\author{Megnath Ramesh, Frank Imeson, Baris Fidan, and Stephen L. Smith
\thanks{M. Ramesh (\ramesh) and S. L. Smith (\smith) are with the Department of Electrical and Computer Engineering and B. Fidan (\fidan) is with the Department of Mechanical and Mechatronics Engineering, at the University of Waterloo, Waterloo ON, Canada}
\thanks{F. Imeson (\imeson) is with Avidbots Corp., Kitchener ON, Canada}
}

\maketitle

\begin{abstract}
In this paper, we tackle the problem of planning an optimal coverage path for a robot operating indoors. Many existing approaches attempt to discourage turns in the path by covering the environment along the least number of \emph{coverage lines}, i.e., straight-line paths. This is because turning not only slows down the robot but also negatively affects the quality of coverage, e.g., tools like cameras and cleaning attachments commonly have poor performance around turns. The problem of minimizing coverage lines however is typically solved using heuristics that do not guarantee optimality. In this work, we propose a turn-minimizing coverage planning method that computes the optimal number of axis-parallel (horizontal/vertical) coverage lines for the environment in polynomial time. We do this by formulating a linear program (LP) that optimally partitions the environment into axis-parallel \emph{ranks} (non-intersecting rectangles of width equal to the tool width). We then generate coverage paths for a set of real-world indoor environments and compare the results with state-of-the-art coverage approaches.
\end{abstract}
\section{Introduction}
Coverage path planning is an automation challenge in which a robot must find an optimal path such that its tool or sensor covers the entire environment \cite{galceranSurveyCoveragePath2013}. This problem has a wide range of applications, including cleaning \cite{hofnerPathPlanningGuidance1995}, agriculture \cite{hameedIntelligentCoveragePath2014}, visual inspection \cite{songOnlineCoverageInspection2020, jingCoveragePathPlanning2019, biundiniFrameworkCoveragePath2021}, and more recently, autonomous disinfection of hospitals during the COVID-19 pandemic \cite{nasirianEfficientCoveragePath2021}. The primary objective in coverage planning is to minimize overlap or ``double'' coverage. This is typically achieved through a lawnmower-style path consisting of parallel non-overlapping coverage lines. Thus, the main differentiation between coverage plans is the transitions between these lines, where the robot performs little to no additional coverage. This has resulted in a growing body of work focused on \emph{minimizing turns}, i.e., minimizing the time spent transitioning between coverage lines. Turns also have the following adverse effects: (i) the robot travels slower around turns which increases total coverage time, (ii) in cleaning applications, the tool does not properly pick up water and dust while turning, and (iii) in sample retrieval applications, the robots may experience high pose estimation errors and poor sensor coverage quality during turns \cite{dasMappingPlanningSample2014}.



\begin{figure}[t!]
\vspace{0.055in}
\centering
\subcaptionbox{}
{\centering
\includegraphics[width=0.44\linewidth]{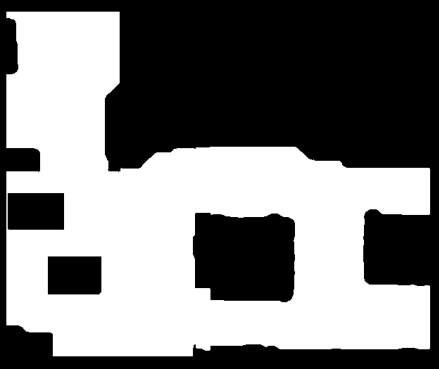}}
\hfill
\subcaptionbox{}
{\centering
\includegraphics[width=0.44\linewidth]{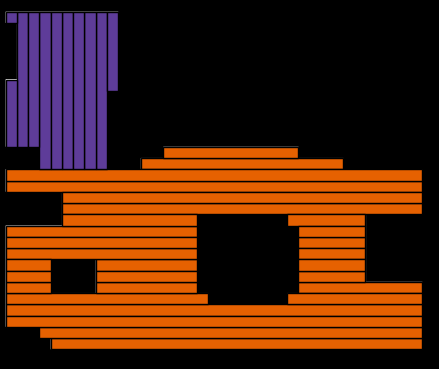}}
\hfill
\caption{(a) An example environment and (b) its optimal rank partitioning obtained using the proposed OARP method: the orange partitions are horizontal ranks and the purple are vertical ranks.
\vspace{-0.5cm}}
\label{fig:rank-part}
\end{figure}


Coverage planning with minimum turns is however an NP-Hard problem \cite{arkinOptimalCoveringTours2005}. As such, a common solution framework is to simplify the problem into three steps:
\begin{enumerate}[label=(\roman*)]
    \item \emph{decompose} the environment into sub-regions or \emph{cells},
    \item use the cells to help \emph{place coverage lines} (straight-line paths) in the environment, and
    \item construct the coverage path by computing a \emph{tour} of the coverage lines.
\end{enumerate}
For example, exact decomposition methods like \cite{chosetCoveragePathPlanning1998} decompose the environments into convex cells, each of which is covered by a lawnmower-style path. Following this framework, the robot only turns when transitioning from one coverage line to another.{\blue As a result, the number of turns in the path is often reduced by minimizing the number of coverage lines.}

A state-of-the-art method which follows this general framework is Vandermeulen et. al. \cite{vandermeulenTurnminimizingMultirobotCoverage2019}, where they aim to cover the environment along a minimum number of axis-parallel (horizontal/vertical) coverage lines. Such a method is particularly well suited for indoor environments, in which many walls are either parallel or orthogonal. Their method decomposes the environment into cells, which are then used to partition the environment into \textit{ranks}, i.e., thin disjoint rectangles of width equal to the robot’s tool width, where the center line of each rank along its length defines a coverage line. However, to solve the partitioning step, the authors propose an iterative heuristic with no optimality guarantees. In this paper, we propose a coverage planner that follows the above framework and plans axis-parallel coverage paths with theoretical guarantees and with better performance.

\textbf{\emph{Contributions:}} Our specific contributions are as follows:
\begin{enumerate}
    \item We prove that the axis-parallel rank partitioning problem is tractable and propose a polynomial-time solver that is guaranteed to find the optimal partition. We do this by posing the problem as a mixed integer linear program (MILP) and then proving that the linear relaxation provides optimal solutions.
    \item We develop the \textbf{O}ptimal \textbf{A}xis-Parallel \textbf{R}ank \textbf{P}artitioning (OARP) method that leverages the above solver and plans a coverage path for non-convex environments.
    \item {\blue We present experimental results using maps of real-world environments and show that OARP outperforms the state-of-the-art method from \cite{vandermeulenTurnminimizingMultirobotCoverage2019} in minimizing the decomposition time, number of ranks, number of turns, and consequently the cost of the coverage plan.} 
\end{enumerate}


\textbf{\emph{Related Work:}} Apart from \cite{vandermeulenTurnminimizingMultirobotCoverage2019}, there are coverage planning methods in the literature that follow the same framework.{\blue Detailed surveys of such coverage planning methods can be found in \cite{galceranSurveyCoveragePath2013} and \cite{bormannIndoorCoveragePath2018}.} One category of approaches is \emph{exact decomposition}, where the environment is directly decomposed into (usually convex) cells without any prior approximation. Boustrophedon \cite{chosetCoveragePathPlanning1998} is a widely used method of this category, where each cell is individually covered in a back-and-forth sweeping pattern along parallel coverage lines. There are also turn-minimizing variations of boustrophedon which aim to minimize the number of turns to cover each cell by determining a locally optimal coverage orientation for the cell. These methods use different criteria to determine a cell's orientation: by analysing the geometry of each cell \cite{bochkarevMinimizingTurnsRobot2016, bahnemannRevisitingBoustrophedonCoverage2021, dasMappingPlanningSample2014} or its convex hull \cite{torresCoveragePathPlanning2016}, or minimizing a cost function that models the robot's turns \cite{j.jinOptimalCoveragePath2010}. While these variations work well to incorporate multiple coverage directions to minimize turns, cell decomposition can be time-intensive for complex real-world environments. Also, these approaches do not guarantee that the total number of turns in the path is minimized. In this paper, we address these shortcomings by creating an optimal axis-parallel rank partition of the environment that implicitly minimizes the total number of turns without incorporating complex cell decomposition.

Another category of approaches, referred to as \emph{grid-based approaches}, involve decomposing the environment into a set of uniform \emph{grid cells}, where the robot must cover all accessible grid cells.{\blue Grid decomposition approximates the environment's boundary and obstacles and captures the areas coverable by the tool.} Our proposed approach falls under this category. The shape of the grid cells is usually a square \cite{cabreiraGridBasedCoveragePath2019} but there are approaches that use hexagons \cite{kanOnlineExplorationCoverage2020}. Coverage planning for a grid environment can be solved by generating a spanning-tree of the grid cells \cite{gabrielySpanningtreeBasedCoverage2001} to find minimum length paths.{\blue A tour of the grid cells can also be obtained by solving the Travelling Salesman Problem (TSP) to generate a coverage plan \cite{bormannIndoorCoveragePath2018}.} Recently, learning-based coverage planning methods have also been introduced to solve this problem \cite{theileUAVCoveragePath2020a, kyawCoveragePathPlanning2020, apuroopReinforcementLearningBasedComplete2021a,  krishnalakshmananCompleteCoveragePath2020}. These approaches however do not minimize the number of turns while our proposed method does.


\textbf{\emph{Organization:}} In Section \ref{section:problem_def}, we define our decomposition approach and introduce the rank partitioning problem. In Section \ref{section:milp}, we formulate this problem as a Mixed Integer Linear Program (MILP) and then prove in Section \ref{section:poly} that this problem can be relaxed to an LP and optimally solved in polynomial time. In Section \ref{section:gtsp}, we briefly discuss how we compute a tour of the ranks using a Generalized Travelling Salesman Problem (GTSP) solver to obtain the full coverage plan. In Section \ref{section:results}, we present our experimental results on a set of real-world environments and compare the performance of our algorithm with the method proposed in \cite{vandermeulenTurnminimizingMultirobotCoverage2019}.

\section{Problem Definition} \label{section:problem_def}
In this section, we provide a brief background on coverage planning and introduce a simple decomposition of the environment to tackle turn-minimization. Finally, we define the partitioning and touring problems which we will be solving in the rest of the paper.

\subsection{Coverage Planning Problem}


Consider a closed and bounded set $\mathcal{W} \subseteq \mathbb{R}^2$ representing all points within the boundaries of our 2D indoor environment. Let $\mathcal{O} \subset \mathcal{W}$ denote the set of obstacles or inaccessible points in $\mathcal{W}$. The set  $\widetilde{\mathcal{W}} = \mathcal{W} \setminus \mathcal{O}$ therefore contains all accessible points in the environment. The goal of the coverage problem is to plan a path so that the robot's tool covers all accessible points in the given environment. The feasibility of this problem however depends on the tool's footprint, which we represent as $\mathcal{A}(\omega) \subset \mathbb{R}^2$ relative to the robot's position $\omega \in \mathbb{R}^2$. Conventionally, $\mathcal{A}(\omega)$ is represented by a simple geometric shape such as a square of a fixed width $l > 0$ centered at $\omega$ \cite{galceranSurveyCoveragePath2013, arkinOptimalCoveringTours2005, gabrielySpanningtreeBasedCoverage2001, vandermeulenTurnminimizingMultirobotCoverage2019}.
{\blue\begin{assumption*}
\label{square-tool-assumption}
The coverage tool is a square with width $l$.
\end{assumption*}}
The square tool may not be able to cover the environment entirely as there will likely be coverage gaps near the boundaries. Let $\widehat{\mathcal{W}} \subseteq \widetilde{\mathcal{W}}$ be the points of the environment that can be covered by the tool. The goal of the coverage problem is then to plan a path $P \subseteq \widetilde{\mathcal{W}}$ such that
\[\bigcup\limits_{\omega_i \in P} \mathcal{A}(\omega_i) = \widehat{\mathcal{W}} \textrm{.}\]


\subsection{IOP Decomposition}

\begin{figure}
\centering
\includegraphics[width=0.22\textwidth]{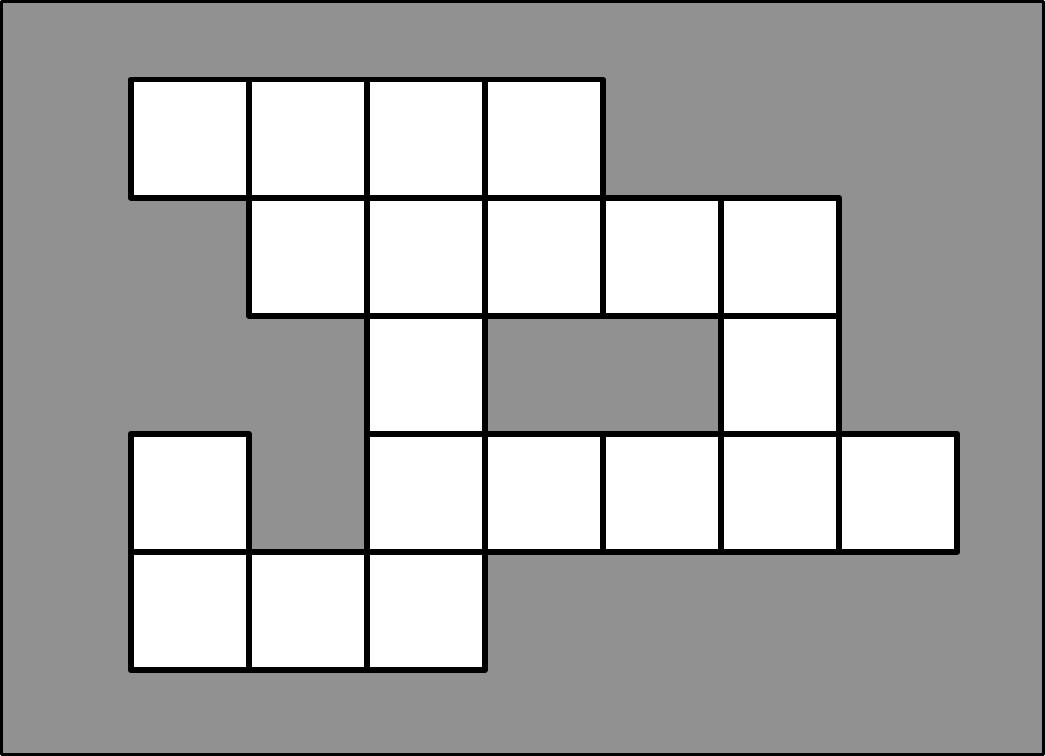}
\caption{An Integral Orthogonal Polygon with light squares representing the polygon and grey areas representing the boundaries and holes.{\blue Each grid cell is as wide as the coverage tool.}
\vspace{-0.5cm}}
\label{fig:ILPGraph}
\end{figure}


In this paper, we look at an extension of this problem where we minimize the number of turns in $P$.{\blue Following from Assumption \ref{square-tool-assumption}, we decompose the environment $\widehat{\mathcal{W}}$ using a square grid and focus on paths where the tool moves only in axis-parallel directions during coverage (horizontal and vertical).} We refer to this decomposition as an Integral Orthogonal Polygon (IOP), which is a set of square \emph{grid cells} of size $l \times l$ \cite{arkinOptimalCoveringTours2005}. Fig.~\ref{fig:ILPGraph} shows an example IOP generated for a simple non-convex environment with one hole. Computing the IOP representation of the environment constitutes the decomposition step of the OARP method.

\subsection{Minimum Rank Partitioning Problem}
 
We now consider a partition that will help quantify the number of potential turns in the final coverage plan. We look at partitioning the environment into regions that will each be covered with a straight-line path (no turns). A \emph{rank} is the cumulative footprint of the coverage tool when traversing a straight-line path. For a square coverage tool of width $l$, a rank is a rectangular region of width $l$. 

{\blue In an IOP, the robot can perform coverage by moving its square tool from one grid cell to another. A rank in an IOP is therefore a series of horizontally or vertically adjacent grid cells (e.g. Fig. \ref{fig:rankintuition}). Since each grid cell is of the same size as the coverage tool, a rank can be covered along a single coverage line (straight-line path) with endpoints at the centers of grid cells. We can thus define the main problem of this paper as follows.}




\begin{problem}[Minimum Rank Partitioning Problem]
Given an IOP representation of $\widehat{\mathcal{W}}$, compute a partition of the IOP into the minimum number of axis-parallel (horizontal and vertical) ranks.
\label{problem:main}
\end{problem}

From the rank partition determined in Problem \ref{problem:main}, we generate a coverage path for the environment by solving the following problem.

\begin{problem}[Tour Generation Problem]
Given a set of axis-parallel ranks (partitioning of an IOP), compute a tour of the ranks that minimizes the total cost of transitions between ranks.
\label{problem:tour}
\end{problem}

{\blue In a tour of the ranks, the coverage tool moves from one rank's endpoint to another along transition paths determined by the robot's dynamics.}



\section{MILP Formulation} \label{section:milp}

In this section, we formulate Problem \ref{problem:main} as a mixed integer linear program (MILP), which involves a set of variables and linear constraints. Consider an IOP with $n$ grid cells, each of which is represented by a variable $c_i$ in $C = \{c_1,c_2,\ldots,c_n\}$. We start by introducing an operator $\mathrm{orient}(c_i) \in \{H, V\}$ for each grid cell $c_i$ to represent its coverage orientation ($H$ for horizontal or $V$ for vertical). We also introduce a series of variables that bound the number of ranks in the final partition. The goal of this MILP is to compute the orientations of grid cells that minimize the number of ranks.

Next, we use the concept of neighbouring grid cells to help represent each rank and construct the objective function to minimize the number of ranks. We do this by formulating constraints that can be interpreted as a series of merges for neighbouring cells. We call a grid cell \emph{merge-able} with its neighbour if: (i) the cell and its horizontal neighbour are horizontally oriented, or (ii) the cell and its vertical neighbour are vertically oriented.{\blue A rank is obtained by merging a set of merge-able cells, and so the rank partition of the IOP depends on the assigned cell orientations.} Fig.~\ref{fig:rankintuition} illustrates the intuition of determining ranks from orientations.

\begin{figure}
\centering
\includegraphics[width=0.4\textwidth]{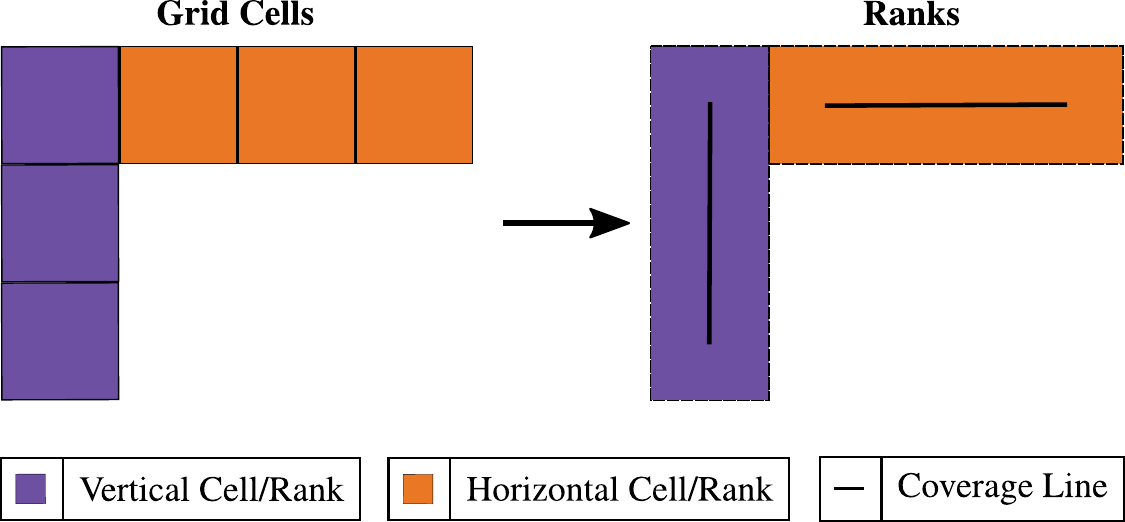}
\caption{Illustration of merging oriented grid cells to form ranks and the corresponding coverage lines.{\blue The cell orientations are determined by the MILP from Section \ref{section:milp}, and the merge-able cells are merged.}
\vspace{-0.5cm}}
\label{fig:rankintuition}
\end{figure}

We now work towards constructing a set of operators that count the ranks for a given set of oriented cells. The following lemma highlights a useful relationship for this task.
\begin{lemma}
In any minimal rank partition, each rank terminates at cells with less than two merge-able neighbours.
\label{lemma:mergability}
\vspace{-0.4cm}
\end{lemma}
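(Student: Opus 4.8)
The plan is a short proof by contradiction resting on one observation: two ranks that are collinear, share the same orientation, and abut along the grid can always be concatenated into a single rank. Suppose, for contradiction, that $\mathcal{P}$ is a minimal rank partition containing a rank $R$ that terminates at a cell $c$ having at least two merge-able neighbours. Recall that whether $c$ is merge-able with a given grid-neighbour is fixed by the orientations alone: a horizontally oriented cell is merge-able only with horizontally oriented left/right grid-neighbours, and a vertically oriented cell only with vertically oriented up/down grid-neighbours. In particular, every merge-able neighbour of $c$ lies on the line through $c$ along the axis of $R$.

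First I would reduce to the case where $R$ is horizontal; the vertical case is identical after exchanging the two coordinate axes. So let $R$ be horizontal and $c$ an end cell of $R$. Along its row, $c$ has at most one grid-neighbour inside $R$ (the next cell of $R$, and none when $|R| = 1$), hence at most one merge-able neighbour inside $R$. Since $c$ has two or more merge-able neighbours, at least one of them, call it $c'$, lies outside $R$. Then $c'$ is a left or right grid-neighbour of $c$, is horizontally oriented, and therefore belongs to a horizontal rank $R'$; moreover $R' \neq R$ because $c' \notin R$, so $R$ and $R'$ are disjoint parts of $\mathcal{P}$. Since $c$ is an end cell of $R$ and $c' \notin R$, the rank $R$ does not continue past $c$ on the side of $c'$; symmetrically $R'$ cannot continue past $c'$ on the side of $c$, as that neighbour of $c'$ is $c \in R$. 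Hence $c'$ is an end cell of $R'$, and the two ranks meet along the adjacent pair $c, c'$.

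It follows that $R \cup R'$ is a contiguous run of horizontally oriented cells in a single row, i.e. a $1 \times (|R| + |R'|)$ block of grid cells, which is a legitimate horizontal rank. Replacing $R$ and $R'$ in $\mathcal{P}$ by the single rank $R \cup R'$ leaves every cell covered exactly once and changes no orientation, so it is again a valid rank partition of the same IOP, but with exactly one fewer rank. This contradicts the minimality of $\mathcal{P}$, which proves the lemma.

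The argument is short, so the main points are conceptual rather than computational. The first is recognizing that minimality is genuinely needed: in a non-minimal partition an end cell can have two merge-able neighbours, one of which sits in an adjacent merge-able rank, and it is exactly this configuration that the merge step excludes. The second is the degenerate case $|R| = 1$, where $c$ is an end cell ``on both sides''; then both merge-able neighbours of $c$ lie outside $R$ and the same concatenation argument applies to either one. The last is checking that $R \cup R'$ is admissible as a rank (a width-$l$ rectangle of connected cells) and consistent with the remaining partition constraints, which is immediate since no orientations change, and that the symmetric vertical case goes through verbatim. None of these is a real obstacle.
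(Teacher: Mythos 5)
Your proof is correct and follows essentially the same route as the paper's: identify a second merge-able neighbour of the end cell lying in a different rank, observe that the two collinear, similarly-oriented ranks can be concatenated into one, and contradict minimality. Your write-up is somewhat more careful (the $|R|=1$ case, the argument that $c'$ is itself an end cell of $R'$, and the check that the union is a valid rank), but the underlying idea is identical.
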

\begin{proof}
By definition, each grid cell in the IOP can have at most two merge-able neighbours across an axis. Let $R = (r_1, r_2, \ldots, r_k)$ be an ordered set of $k$ similarly-oriented cells $r_i \in C$ representing a rank along an axis. By our definition of $R$, these cells are connected in a sequential order $(r_1, r_2, \ldots, r_k)$.

Consider one of the endpoints of $R$: $r_1$. By definition, $r_1$ and $r_2$ are merge-able neighbours. Let $r_1$ have another neighbour $r_t$ that is covered by a rank $\tilde{R}$. If $r_t$ is a merge-able neighbour of $r_1$, then $R$ and $\tilde{R}$ can be merged to a single rank, thereby reducing the number of ranks and contradicting the optimality of the partition. This would mean one of two things: (i) $r_t$ does not exist, i.e., $r_t$ is a border or an obstacle, or (ii) $r_t$ is not a merge-able neighbour. Similarly, we can show that the other endpoint $r_k$ has only one merge-able neighbour $r_{k-1}$, which completes the proof of Lemma \ref{lemma:mergability}. 
\end{proof}

Given the above lemma, we can calculate the number of ranks in an optimal partition by counting one of the two unique endpoints of each rank. For the rest of this paper, we only consider detecting and counting the \emph{left} and \emph{top} endpoints of horizontal and vertical ranks respectively.{\blue The choice of left and top endpoints is arbitrary and does not effect the solution approach.} To identify and count the rank endpoints at the IOP border or holes, we add some artificial cells next to the border cells. These cells are not assigned an orientation, but simply indicate whether there is a rank that has an endpoint at the border. We will refer to these cells as \emph{border identifiers}.

The following logical statements summarize how we will constrain our endpoint counting variables. For each grid cell $c_i$, denoting the left and top neighbors of $c_i$ by $\mathrm{left}(c_i)$ and $\mathrm{top}(c_i)$ respectively, we define our endpoint operators as follows:
\[\mathrm{end_H}(c_i) = \begin{cases*} 
      \hspace{1mm} 1 & \parbox[t]{.25\textwidth}{\raggedright if $\mathrm{orient}(c_i) = H$ and $\mathrm{orient}(\mathrm{left}(c_i)) = V$ }\\[0.1ex]
      \hspace{1mm} 1 & \parbox[t]{.25\textwidth}{\raggedright if $\mathrm{orient}(c_i) = H$ and $\mathrm{left}(c_i)$ is a border identifier} \\[0.1ex]
      \hspace{1mm} 0 & otherwise
   \end{cases*} \textrm{,} \]
\vspace{1pt}
\[\mathrm{end_V}(c_i) = \begin{cases*} 
      \hspace{1mm} 1 & \parbox[t]{.25\textwidth}{\raggedright if $\mathrm{orient}(c_i) = V$ and $\mathrm{orient}(\mathrm{top}(c_i)) = H$} \\[0.1ex]
      \hspace{1mm} 1 & \parbox[t]{.25\textwidth}{\raggedright if $\mathrm{orient}(c_i) = V$ and $\mathrm{top}(c_i)$ is a border identifier}\\[0.1ex]
      \hspace{1mm} 0 & otherwise
   \end{cases*} \textrm{,}\]
where $\mathrm{end_H}(c_i)$ and $\mathrm{end_V}(c_i)$ are binary operators determining if $c_i$ is a horizontal or a vertical endpoint respectively. We now use these functions to formulate the following binary programming problem:
\begin{align}
\min \quad & \sum_{i=0}^{n}\mathrm{end_H}(c_i) + \sum_{i=0}^{n}\mathrm{end_V}(c_i) \label{eq:objective}\\ 
\textrm{s.t.} \quad & \mathrm{orient}(c_i) \in \{H, V\}, \forall i \in \{1, 2, .., n\} \label{eq:constraint} \textrm{.}
\end{align}

Next, we will convert the above binary programming problem to a MILP by replacing the operators with a set of new variables. Let us start with $\mathrm{end_H}(c_i)$ and create the auxiliary variable $y_h^i$ that bounds $\mathrm{end_H}(c_i)$. The upper index ($i$) of $y_h^i$ matches the lower index of $c_i$. We also introduce a set of binary variables to denote whether a cell is horizontally or vertically oriented. For the horizontal case, we introduce $x_h^i$ which equals 1 if $\mathrm{orient}(c_i) = H$ and 0 otherwise. For the left neighbour, $c_l = \mathrm{left}(c_i)$, we observe the following:
\[ x_h^i - x_h^l  = \begin{cases*} 
      \hspace{1mm} 1 & \parbox[t]{.27\textwidth}{\raggedright if $\mathrm{orient}(c_i) = H$ and $\mathrm{orient}(\mathrm{left}(c_i)) = V$} \\[0.1ex]
      \hspace{1mm} 1 & \parbox[t]{.27\textwidth}{\raggedright if $\mathrm{orient}(c_i) = H$ and $\mathrm{left}(c_i)$ is a border identifier} \\[0.1ex]
      \hspace{1mm} -1 & \parbox[t]{.27\textwidth}{\raggedright if $\mathrm{orient}(c_i) = V$ and $\mathrm{orient}(\mathrm{left}(c_i)) = H$} \\[0.1ex]
      \hspace{1mm} 0 & otherwise
   \end{cases*} \textrm{.}\]
The above, along with a non-negativity constraint, gives the following encoding of $y_h^i$:
\begin{align*}
y_h^i &\geq x_h^i - x_h^l\\
y_h^i &\geq 0\
\textrm{,}
\end{align*}
where $y_h^i \geq \mathrm{end_H}(c_i)$. With these constraints, we observe that minimizing $y_h^i$ gives us an optimal solution where $y_h^i = \mathrm{end_H}(c_i)$. We now determine the values of $y_h^i$ for all grid cells using the following system of vector inequalities:
\begin{align}
\boldsymbol{y_h} &\geq A_H\boldsymbol{x_h} \label{eq:a_h}\\
\boldsymbol{y_h} &\geq 0 \label{eq:nonneg}
\textrm{,}
\end{align}
where $\boldsymbol{y_h}$ and $\boldsymbol{x_h}$ are $n$-dimensional vectors composed of the variables $y_h^i$ and $x_h^i$ respectively. On further inspection, $A_H$ in (\ref{eq:a_h}) is the \emph{node-arc incidence (NAI) matrix} \cite{nemhauserIntegerCombinatorialOptimization1999} of the directed graph $G_H$ (see Fig.~\ref{fig:ILP_DiGraph}) composed of all horizontal path flows for an IOP grid cell from its left neighbour/border identifier. Similarly, we encode $\mathrm{end_V}(c)$ using the auxiliary variable vector $\boldsymbol{y_v}$ and the NAI matrix $A_V$ of the directed graph $G_V$ (Fig.~\ref{fig:ILP_DiGraph}). 

\begin{figure}
\centering 
\includegraphics[width=0.4\textwidth]{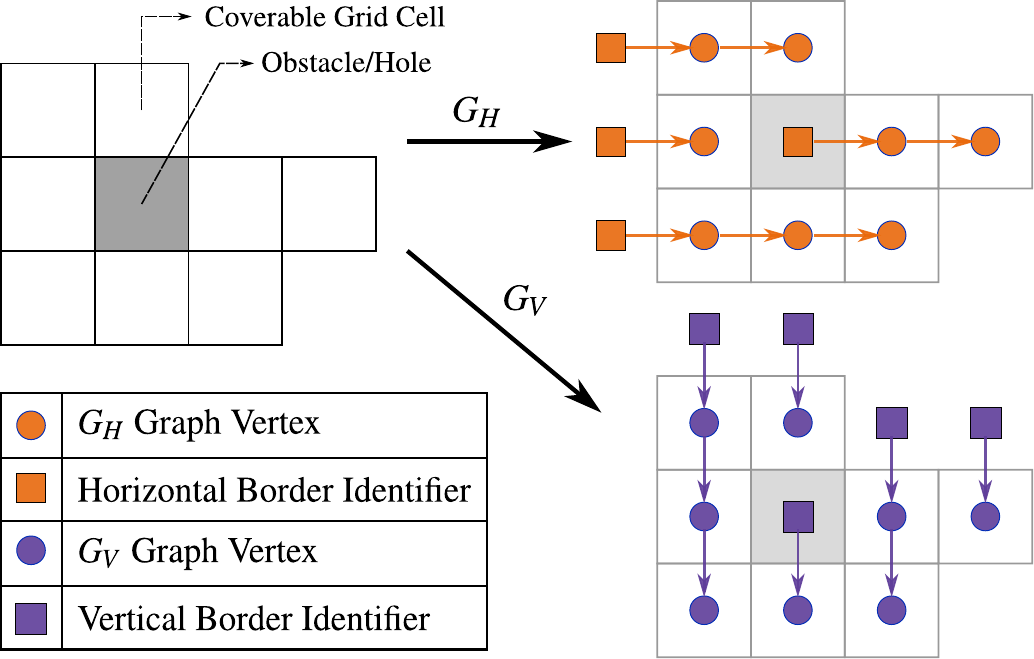}
\caption{The directed graphs $G_H$ and $G_V$ for an example IOP. 
\vspace{-0.5cm}}
\label{fig:ILP_DiGraph}
\end{figure}

Using the defined matrices and vectors, we write the following MILP to solve Problem \ref{problem:main}:
\begin{align}
\vspace{-10pt}
\min \sum_{i=0}^{n}y_h^i + \sum_{i=0}^{n}y_v^i \label{eq:eq2}\\
\textrm{s.t.} \quad 
A_H\boldsymbol{x_h} - \boldsymbol{y_h} \leq \boldsymbol{0} \label{eq:eq3}\\
A_V\boldsymbol{x_v} - \boldsymbol{y_v} \leq \boldsymbol{0} \label{eq:eq4}\\
\boldsymbol{x_h} + \boldsymbol{x_v} = \boldsymbol{1} \label{eq:eq5}\\
\boldsymbol{x_h}, \boldsymbol{x_v} \in \{0, 1\}^{n} & \hspace{4mm}
\boldsymbol{y_h}, \boldsymbol{y_v} \geq \boldsymbol{0} \label{eq:eq6}
\textrm{,}
\end{align}
where $\boldsymbol{1}$ is the column vector of ones and $\boldsymbol{0}$ is the vector of zeros. Eq. (\ref{eq:eq5}) ensures that each grid cell is assigned one of two orientations (either $x_h^i$ or $x_v^i$ equals 1). The objective in Eq. (\ref{eq:eq2}) gives us the number of ranks while $\boldsymbol{x_h}$ and $\boldsymbol{x_v}$ gives us the corresponding grid cell orientations. The solution of the MILP gives the minimum rank partition as determined by the grid cell orientations, where we obtain the ranks by merging all merge-able cells.

We also obtain an LP relaxation from this MILP using Eqs. (\ref{eq:eq2})-(\ref{eq:eq5}) and replacing the binary variables $\boldsymbol{x_h}$ and $\boldsymbol{x_v}$ in (\ref{eq:eq6}) with continuous variables as follows:
\begin{align}
\boldsymbol{x_h}, \boldsymbol{x_v}, \boldsymbol{y_h}, \boldsymbol{y_v} \geq \boldsymbol{0} \label{eq:eq18}
\textrm{.}
\end{align}

\begin{proposition}
The relaxed LP denoted by Eqs. (\ref{eq:eq2})-(\ref{eq:eq5}),(\ref{eq:eq18}) computes an integral optimal solution and hence solves Problem \ref{problem:main} in polynomial time.
\label{proposition:main}
\end{proposition}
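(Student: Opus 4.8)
The plan is to show that the constraint matrix of the relaxed LP is totally unimodular (TU) and that its right-hand side is integral; by the Hoffman--Kruskal integrality theorem this makes the feasible polyhedron integral, so the LP attains its minimum at an integral vertex, and since every linear program is solvable in polynomial time (e.g.\ by the ellipsoid method or an interior-point method) the proposition follows. Grouping the columns as $(\boldsymbol{x_h},\boldsymbol{x_v},\boldsymbol{y_h},\boldsymbol{y_v})$, write the constraint matrix of Eqs.~(\ref{eq:eq2})--(\ref{eq:eq5}),~(\ref{eq:eq18}) as
\[
M=\begin{pmatrix} A_H & 0 & -I & 0\\ 0 & A_V & 0 & -I\\ I & I & 0 & 0\end{pmatrix},
\]
so that the relaxed LP minimizes $\boldsymbol{1}^{\top}(\boldsymbol{y_h}+\boldsymbol{y_v})$ over the polyhedron $P$ of nonnegative $z=(\boldsymbol{x_h},\boldsymbol{x_v},\boldsymbol{y_h},\boldsymbol{y_v})$ cut out by $A_H\boldsymbol{x_h}-\boldsymbol{y_h}\le\boldsymbol{0}$, $A_V\boldsymbol{x_v}-\boldsymbol{y_v}\le\boldsymbol{0}$, $\boldsymbol{x_h}+\boldsymbol{x_v}=\boldsymbol{1}$, i.e.\ by $M$ and the integral right-hand side $(\boldsymbol{0},\boldsymbol{0},\boldsymbol{1})$. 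It therefore remains to prove that $M$ is TU.

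I would establish this through two total-unimodularity--preserving reductions. First, delete the $\boldsymbol{y_h}$ and $\boldsymbol{y_v}$ columns of $M$: each is $-1$ times a standard unit vector, and since adjoining (hence also deleting) unit-vector columns preserves total unimodularity, $M$ is TU if and only if $M'=\bigl(\begin{smallmatrix} A_H & 0\\ 0 & A_V\\ I & I\end{smallmatrix}\bigr)$ is TU. Second, negate the $\boldsymbol{x_v}$ columns (column negation preserves TU), obtaining $\widetilde{M}=\bigl(\begin{smallmatrix} A_H & 0\\ 0 & -A_V\\ I & -I\end{smallmatrix}\bigr)$. Now I would use the structure of $A_H$ and $A_V$ as node--arc incidence matrices of $G_H$ and $G_V$: row $i$ of $A_H$ has a single $+1$ (for $c_i$) and at most one $-1$ (for $\mathrm{left}(c_i)$, absent when that neighbour is a border identifier), and likewise for $A_V$ with $\mathrm{top}(\cdot)$. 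Consequently every row of $\widetilde{M}$ contains at most one $+1$ and at most one $-1$: the top block is $A_H$, the middle block is $-A_V$, and each bottom row contributes a $+1$ in an $x_h^i$ position and a $-1$ in the matching $x_v^i$ position. A $\{-1,0,1\}$-matrix with at most one $+1$ and at most one $-1$ per row is the transpose of the incidence matrix of a directed graph and hence TU \cite{nemhauserIntegerCombinatorialOptimization1999}; therefore $\widetilde{M}$ is TU, so $M'$ is TU, so $M$ is TU.

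With $M$ TU and the right-hand side integral, $P$ is an integral polyhedron; since $P$ lies in the nonnegative orthant it is pointed, and it is nonempty (take $\boldsymbol{x_h}=\boldsymbol{1}$, $\boldsymbol{x_v}=\boldsymbol{0}$ and $\boldsymbol{y_h},\boldsymbol{y_v}$ large enough), so the bounded-below objective attains its minimum at a vertex, which is integral. At that vertex, integrality, nonnegativity and $\boldsymbol{x_h}+\boldsymbol{x_v}=\boldsymbol{1}$ force $x_h^i,x_v^i\in\{0,1\}$, so it is feasible for the MILP of Eqs.~(\ref{eq:eq2})--(\ref{eq:eq6}); since the LP is a relaxation of that MILP, its optimal value is at most the MILP optimum, and, being itself MILP-feasible with the same objective value, the integral vertex is also MILP-optimal. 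By the construction of the MILP together with Lemma~\ref{lemma:mergability}, an optimal MILP solution yields a partition into the minimum number of ranks, which solves Problem~\ref{problem:main}. Finally, the LP has $O(n)$ variables and constraints, it is solved by any polynomial-time LP algorithm, and the returned optimum is already integral (no rounding), so Problem~\ref{problem:main} is solved in polynomial time.

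The crux I expect is proving that the coupled matrix $M$ is totally unimodular --- in particular ruling out that the linking constraint $\boldsymbol{x_h}+\boldsymbol{x_v}=\boldsymbol{1}$ destroys unimodularity. The key is the order of the two reductions: the $\boldsymbol{y}$-columns must be stripped \emph{first} (with them present, rows of $M$ carry two $-1$'s and the per-row criterion fails), after which negating the $\boldsymbol{x_v}$-block aligns the incidence patterns of $G_H$, $G_V$ and the linking equations into a single ``at most one $+1$ and at most one $-1$ per row'' form. Everything else --- integrality of vertices of TU systems, attainment of the LP optimum at a vertex, and polynomial-time LP solvability --- is standard and can be cited.
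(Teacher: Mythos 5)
Your proposal is correct and follows the same architecture as the paper's proof: reduce everything to total unimodularity of the constraint matrix, invoke the Hoffman--Kruskal theorem (Theorem \ref{thm:Hoff-Krusk}) for integrality of the vertices, and cite polynomial-time LP solvability. Indeed, after you strip the $\pm$unit columns your matrix $M'$ is exactly the matrix $\Tilde{A}^T$ that the paper's Lemma \ref{lemma:TUM} works with; the only substantive difference is how its total unimodularity is certified. The paper transposes and applies the two-class row-partition sufficient condition, taking $T_1$ to be the $G_H$ rows together with one copy of $I$ and $T_2$ the $G_V$ rows with the other copy, so that same-sign column pairs are split and opposite-sign pairs stay together. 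You instead negate the $\boldsymbol{x_v}$ column block and observe that the result has at most one $+1$ and at most one $-1$ per row, i.e.\ is (the transpose of) a digraph incidence matrix. These are two faces of the same fact --- the bipartition condition is precisely the statement that one row class can be sign-flipped to reach the incidence form --- so neither argument is more general, but yours makes the network structure visible at a glance and correctly identifies that the $\boldsymbol{y}$-columns must be removed before the per-row criterion can apply. You are also more careful than the paper on two points it elides: that the optimum is actually attained at a vertex (pointedness, nonemptiness, and boundedness below of the objective), and that an integral LP-optimal point is automatically feasible, hence optimal, for the MILP of Eqs.\ (\ref{eq:eq2})--(\ref{eq:eq6}).
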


\section{Proof of Proposition \ref{proposition:main}} \label{section:poly}

In general, solving MILPs is NP-hard while LPs can be solved in polynomial time \cite{karmarkarNewPolynomialtimeAlgorithm1984}. An LP relaxation of a MILP in general will not yield optimal integral solutions. However, our LP relaxation belongs to a special class of problems that yield integral optimal solutions. This is mainly because of our use of NAI matrices, which are totally unimodular (TU) \cite{pitsoulisRepresentabilityTotallyUnimodular2009}. A matrix is TU if the determinants of all its square submatrices are in $\{-1, 0, 1\}$ \cite{nemhauserIntegerCombinatorialOptimization1999}. In this section, we prove that the matrix of all constraints in Eqs. (\ref{eq:eq2})-(\ref{eq:eq5}), (\ref{eq:eq18}) is TU. It will then follow from the Hoffman-Kruskal Principle \cite{hoffmanIntegralBoundaryPoints2010} that the relaxed LP can directly solve the MILP in polynomial time, since our matrix is TU.

\begin{theorem*}[Hoffman-Kruskal Principle \cite{hoffmanIntegralBoundaryPoints2010}]
\label{thm:Hoff-Krusk}
If an integral matrix A is TU, then for an integral vector $\boldsymbol{b}$, the polyhedron $\{\boldsymbol{x} \in \mathbb{R}^n \mid A\boldsymbol{x} \leq \boldsymbol{b}\}$ has integral coordinates.
\end{theorem*}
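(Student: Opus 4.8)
The plan is to prove the stated implication directly by showing that every vertex (extreme point) of $P = \{\boldsymbol{x} \in \mathbb{R}^n \mid A\boldsymbol{x} \le \boldsymbol{b}\}$ is integral. Interpreting the phrase ``$P$ has integral coordinates'' as the assertion that all of its vertices are integral, and noting that in our application $P$ is pointed (the nonnegativity constraints confine it to the nonnegative orthant, so it contains no line and hence possesses vertices), it suffices to fix an arbitrary vertex $\boldsymbol{x}^\ast$ and prove $\boldsymbol{x}^\ast \in \mathbb{Z}^n$.

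First I would invoke the algebraic characterization of a vertex: if $\boldsymbol{x}^\ast$ is a vertex of $P$, then among the constraints of $A\boldsymbol{x} \le \boldsymbol{b}$ that are tight at $\boldsymbol{x}^\ast$ there exist $n$ whose rows of $A$ are linearly independent. Collecting these rows into a square submatrix $B$ and the corresponding entries of $\boldsymbol{b}$ into $\boldsymbol{b}_B$ yields a nonsingular system $B\boldsymbol{x}^\ast = \boldsymbol{b}_B$, so that $\boldsymbol{x}^\ast = B^{-1}\boldsymbol{b}_B$ is its unique solution.

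The next step uses total unimodularity together with Cramer's rule. Since $B$ is a nonsingular $n \times n$ submatrix of the TU matrix $A$, the definition of TU forces $\det B \in \{-1,0,1\}$, and nonsingularity excludes $0$, so $\det B = \pm 1$. Cramer's rule then gives, coordinatewise,
\begin{equation*}
x^\ast_j = \frac{\det B_j}{\det B},
\end{equation*}
where $B_j$ denotes $B$ with its $j$-th column replaced by $\boldsymbol{b}_B$. Because $A$ and $\boldsymbol{b}$ are integral, $B_j$ is an integer matrix, so $\det B_j \in \mathbb{Z}$; dividing by $\det B = \pm 1$ leaves an integer. Hence $x^\ast_j \in \mathbb{Z}$ for each $j$, the vertex $\boldsymbol{x}^\ast$ is integral, and since $\boldsymbol{x}^\ast$ was arbitrary the polyhedron has integral vertices.

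I expect the main obstacle to lie in the vertex characterization rather than in the determinant argument: one must justify that $\boldsymbol{x}^\ast$ is a vertex precisely when it is the unique point satisfying some collection of $n$ tight, linearly independent constraints, and that such a subsystem exists for a pointed polyhedron. Once this rank statement is in place, the total unimodularity hypothesis and Cramer's rule render the integrality of every coordinate immediate, with no further case analysis required.
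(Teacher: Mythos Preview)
Your argument is the standard Cramer's-rule proof of the ``easy'' direction of Hoffman--Kruskal, and it is correct as written: a vertex is determined by $n$ linearly independent tight rows, the resulting square submatrix $B$ has $\det B = \pm 1$ by total unimodularity, and Cramer's rule with integral data yields integral coordinates. Your caveat about pointedness is well placed; the general statement requires interpreting ``integral polyhedron'' in terms of minimal faces rather than vertices, but for the application here the nonnegativity constraints do guarantee vertices exist, so your restriction is harmless.

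The comparison with the paper is simple: the paper does \emph{not} prove this theorem at all. It is quoted as a classical result with a citation to Hoffman and Kruskal, and then invoked as a black box in the proof of Proposition~1 (via Lemma~2, which establishes that the constraint matrix of the relaxed LP is TU). So there is no ``paper's own proof'' to align with; you have supplied a self-contained proof where the paper relies on the literature. Your write-up would serve perfectly well as a supporting appendix if one wanted the manuscript to be self-contained on this point.
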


Since the polyhedron has integral coordinates, the resulting optimal solution for the LP defined in the polyhedron is also integral. 

\begin{lemma}
The constraints of the LP formulation in Eqs. (\ref{eq:eq2})-(\ref{eq:eq5}),(\ref{eq:eq18}) are TU.
\label{lemma:TUM}
\end{lemma}
\begin{proof}
First, we write our problem in Standard Equality Form (SEF) using slack variables to eliminate the inequality constraints. We then rewrite the summations in Eq. \ref{eq:eq2} using a vector multiplication with $\boldsymbol{1}$ and obtain
\begin{align}
\min \quad & \boldsymbol{1}^T\boldsymbol{y_h} + \boldsymbol{1}^T\boldsymbol{y_v} \label{eq:eq7}\\
\textrm{s.t.} 
\quad & A_H\boldsymbol{x_h} - \boldsymbol{y_h} + \boldsymbol{z_h} = \boldsymbol{0} \label{eq:eq8}\\
\quad & A_V\boldsymbol{x_v} - \boldsymbol{y_v} + \boldsymbol{z_v} = \boldsymbol{0} \label{eq:eq9} \\
\quad & \boldsymbol{x_h} + \boldsymbol{x_v}= \boldsymbol{1} \label{eq:eq10}\\
\quad & \boldsymbol{x_h}, \boldsymbol{x_v}, \boldsymbol{y_h}, \boldsymbol{y_v}, \boldsymbol{z_h}, \boldsymbol{z_v} \geq \boldsymbol{0} \label{eq:eq11}
\textrm{.}
\end{align}

The constraints are now of the form $A\boldsymbol{x} = \boldsymbol{b}$, where

\begin{equation}
\begin{aligned}
A = \begin{bmatrix}
A_H & \boldsymbol{\overline{0}} & -I & \boldsymbol{\overline{0}} & I & \boldsymbol{\overline{0}} \\
\boldsymbol{\overline{0}} & A_V & \boldsymbol{\overline{0}} & -I & \boldsymbol{\overline{0}} & I \\
I & I & \boldsymbol{\overline{0}} & \boldsymbol{\overline{0}} & \boldsymbol{\overline{0}} & \boldsymbol{\overline{0}}
\end{bmatrix}
\textrm{,}
\label{eq:eq15}
\end{aligned}
\end{equation}
\begin{align}
\quad & \boldsymbol{b} = \begin{bmatrix}
\boldsymbol{0}^T &
\boldsymbol{0}^T &
\boldsymbol{1}^T
\end{bmatrix}^T
\textrm{,}
\label{eq:eq25}
\end{align}
\begin{align}
\quad & \boldsymbol{x} = \begin{bmatrix}
\boldsymbol{x_h}^T &
\boldsymbol{x_v}^T &
\boldsymbol{y_h}^T &
\boldsymbol{y_v}^T &
\boldsymbol{z_h}^T &
\boldsymbol{z_v}^T
\end{bmatrix}^T
\textrm{,}
\end{align}

$\boldsymbol{\overline{0}}$ is the matrix of zeros and $I$ is the identity matrix. Clearly $\boldsymbol{b}$ is an integral vector. 

To prove $A$ is TU, let us start with the matrix
\begin{align*}
\Tilde{A} = \begin{bmatrix}
A_H^T & \boldsymbol{\overline{0}} & I \\
\boldsymbol{\overline{0}} & A_V^T & I
\end{bmatrix}
\textrm{.}
\end{align*}

Each row of the NAI matrices $A_H$ and $A_V$ signifies a directed edge in the graph, with a -1 for the source grid cell (outgoing), a +1 for the sink grid cell (incoming), and 0s otherwise \cite{nemhauserIntegerCombinatorialOptimization1999}.

Due to this construction, $\Tilde{A}$ satisfies the following sufficient conditions for TU matrices as outlined in \cite{hoffmanIntegralBoundaryPoints2010}.
\begin{enumerate}
    \item All elements are in \{-1, 0, +1\} (all entries in $A_V$ and $A_H$ are either -1, 0, or +1).
    \item Each column of $\tilde{A}$ has at most two non-zero elements (each column of $A_H^T$ and $A_V^T$ will have two non-zero entries, one for the source and one for the sink).
    \item There exists a partition of the rows of $\tilde{A}$ into two disjoint sets $T_1$ and $T_2$ such that:
        \begin{enumerate}[label=(\roman*)]
            \item If any column of $\tilde{A}$ contains two nonzero entries of the same sign, then one is in a row of $T_1$ and the other is in a row of $T_2$.
            \item If any column of $\tilde{A}$ contains two nonzero entries of the opposite sign, then they are both in a row of $T_1$ or in a row of $T_2$.
        \end{enumerate}
    For $\tilde{A}$, $T_1$ consists of all rows in
        \begin{align*}
        \begin{bmatrix}
        A_H^T &
        \boldsymbol{\overline{0}} & I
        \end{bmatrix}
        \end{align*} 
        and $T_2$ consists of all rows in
        \begin{align*}
        \begin{bmatrix}
        \boldsymbol{\overline{0}} &
        A_V^T &
        I
        \end{bmatrix}
        \textrm{.}
        \end{align*}
\end{enumerate}

It follows from \cite{nemhauserIntegerCombinatorialOptimization1999} that the transpose of $\Tilde{A}$ is also TU. We can construct $A$ from $\Tilde{A}^T$ by appending columns of the positive and negative signed unit matrices $\Bar{I}$ and $-\Bar{I}$ where
\[\Bar{I} = \begin{bmatrix}
I & \boldsymbol{\overline{0}} & \boldsymbol{\overline{0}} \\
\boldsymbol{\overline{0}} & I & \boldsymbol{\overline{0}} \\
\boldsymbol{\overline{0}} & \boldsymbol{\overline{0}} & I
\end{bmatrix}\textrm{.}\]
Appending these columns preserves the total unimodularity of the resulting matrix, see \cite{nemhauserIntegerCombinatorialOptimization1999}. Because of this, the matrix $A$ is TU, which completes the proof of Lemma \ref{lemma:TUM}.
\end{proof}

Since the constraints are TU, it follows from Theorem \ref{thm:Hoff-Krusk} that the relaxed LP computes optimal integral solutions for the formulated MILP, thereby completing the proof of Proposition \ref{proposition:main}. We can therefore use efficient polynomial-time LP solvers \cite{karmarkarNewPolynomialtimeAlgorithm1984} to solve Problem \ref{problem:main} optimally.

\section{Tour Generation}\label{section:gtsp}

We now address Problem \ref{problem:tour} and plan our full coverage path by computing a tour of the ranks obtained from the LP in Section \ref{section:milp}. We do this by formulating a GTSP to determine a visitation order for the ranks. Using a GTSP formulation offers us the flexibility to use existing approaches to compute the tour \cite{smithGLNSEffectiveLarge2017, noonEfficientTransformationGeneralized1993a}. We employ a similar formulation that was proposed in \cite{bochkarevMinimizingTurnsRobot2016} and create an auxiliary graph representing all possible connections between the ranks. The vertices of this graph are grouped into sets, where each set consists of two vertices representing the two directions to cover the rank (e.g., for a horizontal line we can cover it from left to right or right to left). The GTSP aims to minimize the cost of visiting one vertex in every set (traversing each rank in a specific direction).

The edge costs between the vertices in different sets of the auxiliary graph are given by the time to travel between rank endpoints along an obstacle-free \emph{transition path}.{\blue These transition paths are planned using the dynamics of the robot, which depends on the robot's design. For this work, we assume semi-holonomic dynamics where (i) the robot stops to turn in-place with a constant angular velocity, and (ii) the robot has piecewise constant acceleration (when accelerating or decelerating) while travelling in a straight line with a maximum linear velocity. The same dynamics model is used in \cite{vandermeulenTurnminimizingMultirobotCoverage2019} and \cite{bahnemannRevisitingBoustrophedonCoverage2021}, which we compare our method against in Section \ref{section:results}.} We compute each transition path by constructing a visibility graph within the IOP boundaries and planning the shortest path using an A* search \cite{obermeyerVisiLibityLibraryVisibility2008}.{\blue The resulting path is a sequence of straight lines and intermediary turns, for which the travel time is computed.}


\section{Results}\label{section:results}

{\blue In this section, we present our experiments to test the performance of OARP and compare it to two different coverage planning approaches from literature. Finally, we present ROS simulations on an example environment with the Avidbots Neo robot model.}


{\blue Firstly, in Sections~\ref{decomp-comp} to \ref{tour-comp}, we compare OARP to the heuristic approach from \cite{vandermeulenTurnminimizingMultirobotCoverage2019}.} We chose the heuristic approach for our main comparisons because it produced the least turns and the lowest tour costs in our analysis of the available approaches.{\blue While the work in \cite{vandermeulenTurnminimizingMultirobotCoverage2019} has applied the heuristic approach for both single and multi-robot cases, the rank partitioning step is itself independent of the number of robots, which is similar to OARP. We focus on the single-robot coverage tours in this comparison to analyze how the different rank partitioning methods affect the tour.} For this comparison, we use a dataset containing 2D maps of 44 real-world environments obtained from Avidbots and used for their Neo cleaning robot. These environments have a minimum coverage path length ranging from 100 $m$ to around 3700 $m$, where the minimum coverage path length of an environment is given by $\textrm{(total area)}/(\textrm{tool width } l)$. The maps in this dataset are arranged/labeled by increasing complexity, i.e., the minimum number of ranks required to cover the environment. Due to the confidentiality of these scans, we cannot share this dataset but we have additionally generated a set of anonymized environments for visualization, which are shown in Figs. \ref{fig:rank-part} and \ref{fig:cov-tour}. Each method is run for 20 trials on each map in the dataset and we compare the results for every stage of the coverage planning process.



\begin{figure}
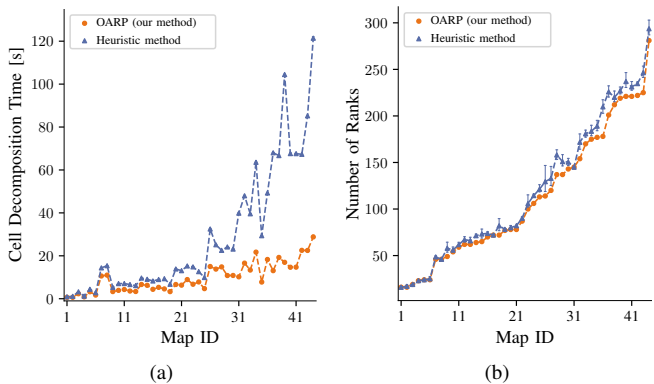

\centering
\subcaptionbox{}
{\centering
\resizebox{0.476\linewidth}{!}{\input{Assets/Results/av_preprocess.pgf}}
}
\hfill
\subcaptionbox{}
{\centering
\resizebox{0.476\linewidth}{!}{\input{Assets/Results/av_ranks.pgf}}}
\caption{Comparison of our rank partitioning method with that proposed in \cite{vandermeulenTurnminimizingMultirobotCoverage2019}: (a) Cell decomposition runtime for each map. (b) The number of ranks determined for each test environment.
\vspace{-0.5cm}}
\label{fig:rank_comp}
\end{figure}

\subsection{Decomposition Comparisons}
\label{decomp-comp}

For each map in our dataset, we compared the time it takes for both methods to decompose the environment into cells. The heuristic method decomposes the environment into a preliminary set of coarse rectangular cells \cite{vandermeulenTurnminimizingMultirobotCoverage2019}, while the OARP method simply needs to express the IOP of the environment using a grid overlay.{\blue To construct the IOP, we include all grid cells where over 50\% of its area is contained within the original environment.} The results of the comparison are shown in Fig.~\ref{fig:rank_comp}a, where we observe that as the complexity of the map increases, the cell decomposition time for the heuristic method increases rapidly. In contrast, OARP only requires a grid approximation which can be computed much faster; about 50\% faster on average and 5 times faster in some cases.{\blue This makes a large impact on the overall planner time, especially in cases where the coverage plan would need to be replanned due to perceived changes or uncertainty in the environment. Robustification of our approach to dynamic environment uncertainties is a future research direction.} 

\subsection{Rank Partitioning Comparisons}
\label{rank-comp}

\begin{figure}[t!]
\centering
\includegraphics[width=0.68\linewidth]{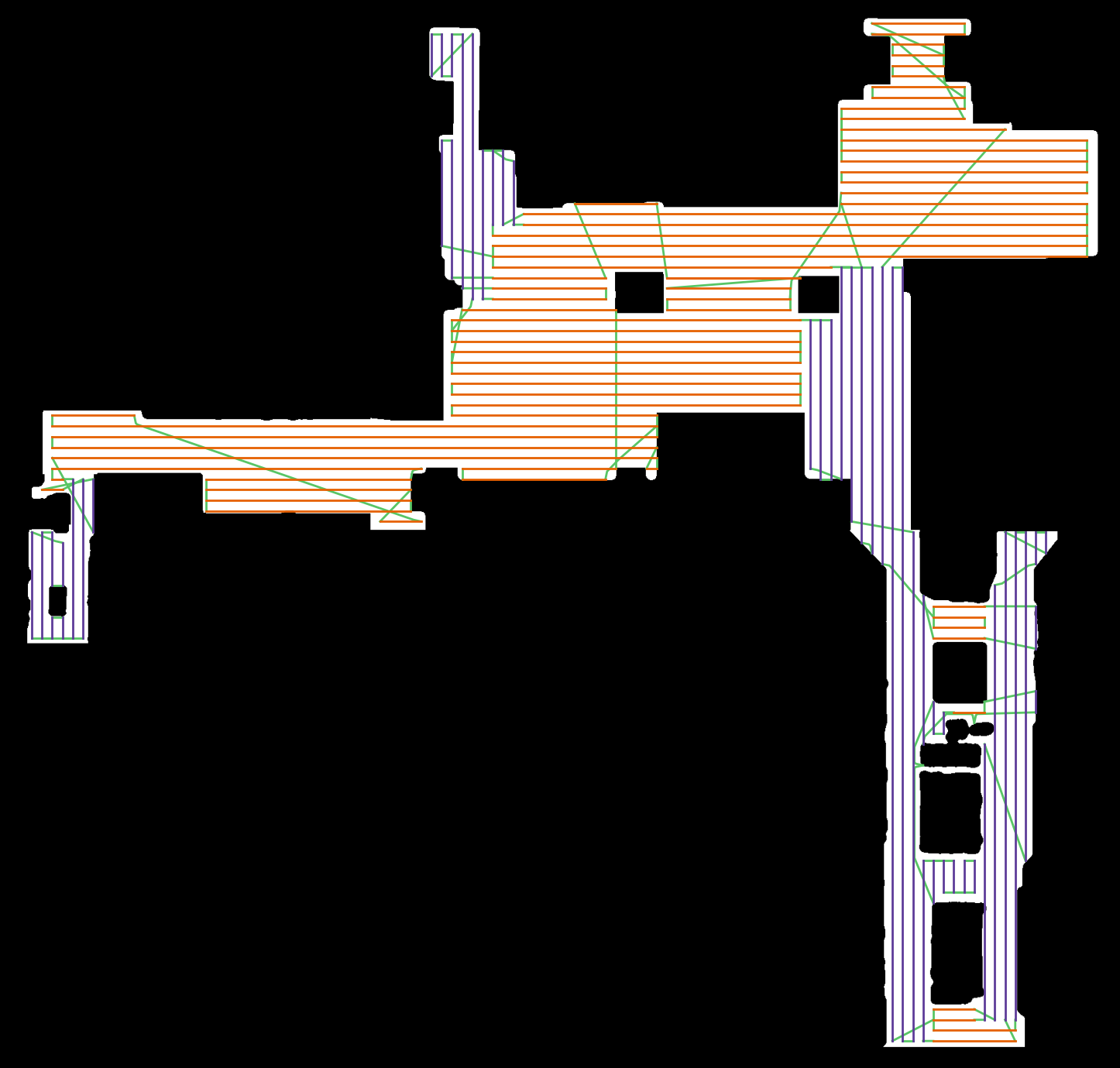}
\caption{Coverage path of an indoor environment generated using the proposed LP method.
\vspace{-0.5cm}}
\label{fig:cov-tour}
\end{figure}

We implement our rank partitioning step by expressing the LP using Eqs. (\ref{eq:eq2})-(\ref{eq:eq5}), Eq. (\ref{eq:eq18}), and the IOP built in the decomposition step. The algorithm was prototyped in Python and we used the free LP solver, GLPK \cite{makhorinGLPKGNUProject} to solve the LP. The coverage tool width was set to 0.8 $m$ to model Avidbots' robot. Fig.~\ref{fig:rank-part} shows the rank partitioning of an example map with the horizontal (orange) and vertical (purple) ranks as determined by the LP solver. Solving the LP takes an average of 0.4 seconds across all trials for all maps.

For the heuristic method, we solve the same IOP coverage problem by only including the interior ranks from their method (we ignore the perimeter ranks). The heuristic method is an iterative algorithm, where running for more iterations improves the performance. So, one would need to allow enough iterations to find high quality solutions, but avoid extra iterations after the optimal has been found. In contrast, OARP avoids unnecessary computation by deterministically finding an optimal solution before terminating.


To standardize the comparison, we limit the runtime of the heuristic method's partitioning step to that of OARP's partitioning step (the LP's runtime) for each map. Fig.~\ref{fig:rank_comp}b shows the results of the experiments. As expected, we observe that the number of ranks obtained by OARP is always as good or better than the heuristic method, with the performance gap widening for more complex maps. In general, the heuristic method does quite well, likely due to the tractability of the problem. However, since this problem is tractable and thus solvable in polynomial time, one should choose an optimal solver.

\subsection{Coverage Tour Comparisons}
\label{tour-comp}

\begin{table}
\vspace{6pt}
\centering
\caption{Robot parameters used for experiments}
\label{table:parameters}
\begin{tabular}{ l | l}
\toprule
Coverage Tool Width & 0.8 $m$\\
Maximum Linear Velocity & 1 $m/s$\\
Linear Acceleration & $\pm 0.5$ $m/s^2$ \\
Angular Velocity &$30^{\circ}/s$ \\
\bottomrule
\end{tabular}
\end{table}

To generate the coverage tour from the ranks, we use the GTSP formulation in Section \ref{section:gtsp} to compute a tour that minimizes the transition time between the ranks. Table \ref{table:parameters} documents our choice of parameters used to compute the transition costs by simulating the robot's coverage tool and dynamics. We used GLNS \cite{smithGLNSEffectiveLarge2017} to solve the formulated GTSP problem for both OARP and the heuristic methods. Fig.~\ref{fig:cov-tour} shows an example coverage tour for an anonymized environment from our dataset, with the transitions (green lines) between the ranks as determined by the GTSP solver.

We also compared our tours to those obtained using the heuristic method from \cite{vandermeulenTurnminimizingMultirobotCoverage2019}. Since the ranks are non-overlapping for both methods, the cost of the tour is related to the number of turns. Fig.~\ref{fig:improvement} shows the comparison results, where we plot the average percent improvement of OARP over the heuristic method for each map on two metrics: a) number of turns, and b) total coverage tour cost. We observe that OARP improves the coverage tours on both metrics for almost all maps. For a small number of maps, we see little to no improvement due to the heuristic method providing the same number of ranks as OARP for those maps.{\blue On average for all maps, OARP improved the number of turns by 5.9\% and the tour cost by around 2.7\%. For some maps, we get a 16.8\% improvement in the number of turns, and a 7.6\% improvement in total coverage time.} From this comparison, we observe the intended trend that less ranks leads to less turns, and less turns leads to shorter coverage paths.

\subsection{Comparison against BCD}
\label{bcd-comp}

\begin{figure}
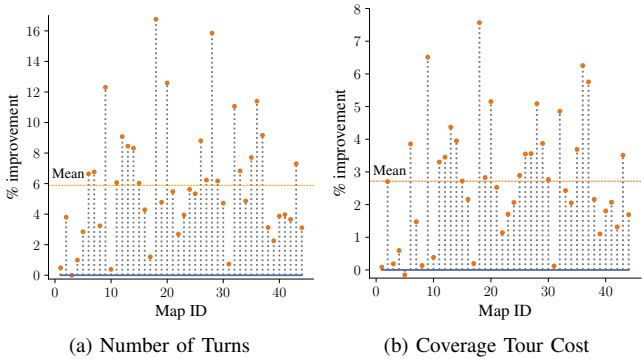

\centering
\subcaptionbox{Number of Turns}
{\centering
\resizebox{0.475\linewidth}{!}{\input{Assets/Results/av_turn_comp.pgf}}}
\hfill
\subcaptionbox{Coverage Tour Cost}
{\centering
\resizebox{0.475\linewidth}{!}{\input{Assets/Results/av_tour_cost_comp.pgf}}
}
\hfill
\caption{\blue Average \% improvement exhibited by the OARP method in comparison with \cite{vandermeulenTurnminimizingMultirobotCoverage2019} for each map.
\vspace{-0.5cm}}
\label{fig:improvement}
\end{figure}

\begin{figure}
\centering

\resizebox{0.7\linewidth}{!}{\input{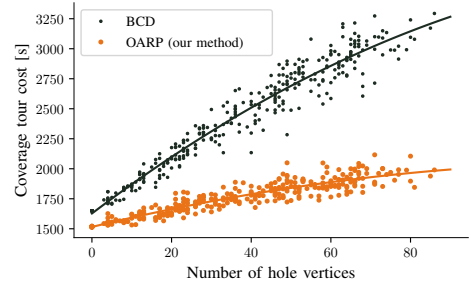}}
\caption{\blue Comparison results of the coverage tour generated using OARP and the BCD method \cite{bahnemannRevisitingBoustrophedonCoverage2021}. The x-axis specifies the number of vertices representing the holes in the maps.
\vspace{-0.1cm}}
\label{fig:bcd-comp}
\end{figure}

{We additionally compared the proposed OARP method against the Boustrophedon Cell Decomposition (BCD) method from \cite{bahnemannRevisitingBoustrophedonCoverage2021}.} For this comparison, we used the dataset from \cite{bahnemannRevisitingBoustrophedonCoverage2021} containing over 300 aerial scans of environments with varying number of holes (obstacles). The complexity measure of the maps in this dataset is given by \textit{hole vertices}, i.e., the total number of vertices required to represent the holes in the map (e.g., one triangular hole has 3 vertices). The maximum number of hole vertices in this dataset is 86, whereas in Avidbots' dataset, this number goes up to 900. We do not test BCD on Avidbots' dataset as it would require {significant} modifications to the BCD implementation, but we expect to see a similar but exaggerated trend due to the large number of hole vertices.{\blue To perform this comparison, we use the same parameters from \cite{bahnemannRevisitingBoustrophedonCoverage2021}; a tool width of 3 $m$, linear acceleration of $\pm 1$ $m/s^2$, and maximum velocity of 3 $m/s$. We also penalize turns by adding turning time to the cost function, where the robot turns in-place with an angular velocity of $30^{\circ}/s$.} Fig.~\ref{fig:bcd-comp} shows the results of comparing the cost of the coverage tours generated by OARP and BCD using identical parameters.{\blue We observed that OARP outperforms BCD for all maps, and improves the coverage tour by 25.5\% on average.} We believe that this performance gap is due to OARP's better coverage line placement. We also tested the heuristic method from \cite{vandermeulenTurnminimizingMultirobotCoverage2019} on this dataset and observed that, while OARP still outperforms~\cite{vandermeulenTurnminimizingMultirobotCoverage2019}, the average performance improvement is small ($<$1\%) due to the relative simplicity of the maps in this dataset.

{\blue
\subsection{ROS Simulation Case Study}
\label{robot-sim}

\begin{figure}
\centering
\subcaptionbox{}
{\centering
\includegraphics[width=0.42\linewidth]{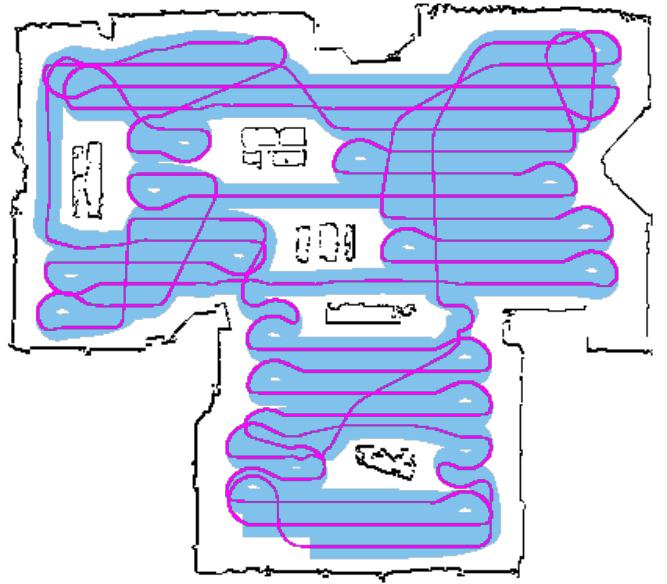}}
\hfill
\subcaptionbox{}
{\centering
\includegraphics[width=0.42\linewidth]{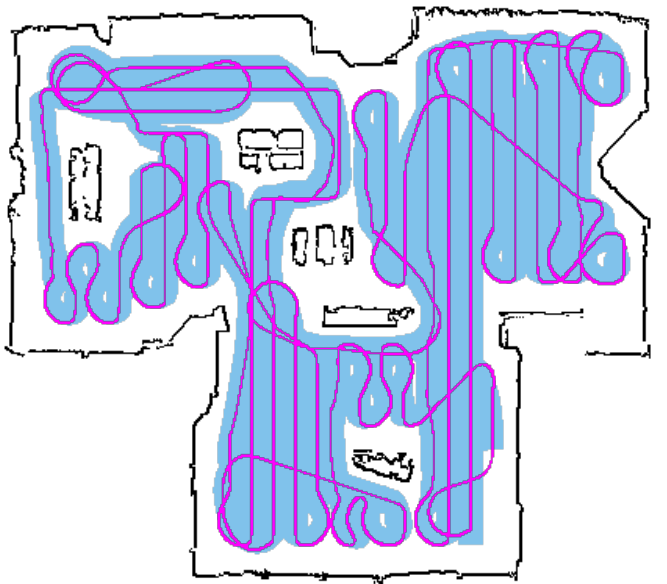}}
\hfill
\caption{\blue Simulation results of covering an example environment using the Avidbots Neo robot. The covarage planners used were (a) OARP and (b) the heuristic method from \cite{vandermeulenTurnminimizingMultirobotCoverage2019}.
\vspace{-0.5cm}}
\label{fig:robot-sim-res}
\end{figure}

In this subsection, we provide a case study of simulating the coverage of an example environment using the Avidbots Neo robot. The simulation was conducted using ROS on an anonymized hybrid environment as shown in Fig.~\ref{fig:robot-sim-res}. The Neo robot has tricycle dynamics and so coverage was planned with sufficient space at the boundaries to allow comfortable turns. The robot was also equipped with a local planner to replan difficult parts of the coverage plan (e.g. areas with small coverage lines) if necessary. Fig.~\ref{fig:robot-sim-res} shows the resulting environment coverage using plans generated by OARP and the heuristic method \cite{vandermeulenTurnminimizingMultirobotCoverage2019}. We observed that OARP's plan was nearly 13\% faster than the heuristic method's plan and 3.3\% shorter in path length. We also observed that OARP's plan covered more area of this map than the heuristic method's plan (1.1\% more). The reasoning for this is two-fold: OARP's path consists of less turns (poor coverage at turns), and OARP's plan required less replans by the local planner. We also attempted to test the plan generated using the BCD planner from \cite{bahnemannRevisitingBoustrophedonCoverage2021}, but the plan contained many sharp turns that are infeasible for the Neo robot. Recordings of the simulations are also included in our video attachment.


}

\section{Conclusion}

In this paper, we proposed the OARP coverage planning method that aims to minimize the number of turns needed to cover the robot's environment. We do this by partitioning the environment into thin axis-parallel ranks using a linear program (LP). We proved that this formulation computes the optimal rank partitioning in polynomial time. We then conducted experiments on maps of real-world environments and showed that OARP always achieves the best rank partition in comparison with the state-of-the-art method. Furthermore, OARP is also shown to have around 6\% lesser turns and 3\% shorter coverage tours on average than the other method. 

{\blue While OARP can be used for any environment, applying the axis-parallel constraint may result in "stair-case" paths to cover narrow areas with non-axis-parallel or curved boundaries. A future research direction is relaxing this constraint to improve coverage of such areas.} We also aim to leverage the performance of OARP to develop an online turn-minimizing coverage planner for uncertain dynamic environments.

\section*{Acknowledgment}
This work was supported by Canadian Mitacs Accelerate Project IT16435 and Avidbots Corp, Kitchener, ON, Canada.


\bibliographystyle{IEEEtran}
\bibliography{IEEEabrv,references}

\begin{thebibliography}{10}
\providecommand{\url}[1]{#1}
\csname url@samestyle\endcsname
\providecommand{\newblock}{\relax}
\providecommand{\bibinfo}[2]{#2}
\providecommand{\BIBentrySTDinterwordspacing}{\spaceskip=0pt\relax}
\providecommand{\BIBentryALTinterwordstretchfactor}{4}
\providecommand{\BIBentryALTinterwordspacing}{\spaceskip=\fontdimen2\font plus
\BIBentryALTinterwordstretchfactor\fontdimen3\font minus
  \fontdimen4\font\relax}
\providecommand{\BIBforeignlanguage}[2]{{%
\expandafter\ifx\csname l@#1\endcsname\relax
\typeout{** WARNING: IEEEtran.bst: No hyphenation pattern has been}%
\typeout{** loaded for the language `#1'. Using the pattern for}%
\typeout{** the default language instead.}%
\else
\language=\csname l@#1\endcsname
\fi
#2}}
\providecommand{\BIBdecl}{\relax}
\BIBdecl

\bibitem{galceranSurveyCoveragePath2013}
E.~Galceran and M.~Carreras, ``A survey on coverage path planning for
  robotics,'' \emph{Robotics and Autonomous Systems}, vol.~61, no.~12, pp.
  1258--1276, 2013.

\bibitem{hofnerPathPlanningGuidance1995}
C.~Hofner and G.~Schmidt, ``Path planning and guidance techniques for an
  autonomous mobile cleaning robot,'' \emph{Robotics and Autonomous Systems},
  vol.~14, no.~2, pp. 199--212, 1995.

\bibitem{hameedIntelligentCoveragePath2014}
I.~A. Hameed, ``Intelligent coverage path planning for agricultural robots and
  autonomous machines on three-dimensional terrain,'' \emph{Journal of
  Intelligent and Robotic Systems: Theory and Applications}, vol.~74, no. 3-4,
  pp. 965--983, 2014.

\bibitem{songOnlineCoverageInspection2020}
S.~Song, D.~Kim, and S.~Jo, ``Online coverage and inspection planning for
  {{3D}} modeling,'' \emph{Autonomous Robots}, vol.~44, no.~8, pp. 1431--1450,
  2020.

\bibitem{jingCoveragePathPlanning2019}
W.~Jing, D.~Deng, Z.~Xiao, Y.~Liu, and K.~Shimada, ``Coverage {{Path Planning}}
  using {{Path Primitive Sampling}} and {{Primitive Coverage Graph}} for
  {{Visual Inspection}},'' in \emph{{{IEEE}}/{{RSJ International Conference}}
  on {{Intelligent Robots}} and {{Systems}} ({{IROS}})}, 2019, pp. 1472--1479.

\bibitem{biundiniFrameworkCoveragePath2021}
I.~Z. Biundini, M.~F. Pinto, A.~G. Melo, A.~L.~M. Marcato, L.~M. Hon{\'o}rio,
  and M.~J.~R. Aguiar, ``A {{Framework}} for {{Coverage Path Planning
  Optimization Based}} on {{Point Cloud}} for {{Structural Inspection}},''
  \emph{Sensors}, vol.~21, no.~2, p. 570, 2021.

\bibitem{nasirianEfficientCoveragePath2021}
B.~Nasirian, M.~Mehrandezh, and F.~{Janabi-Sharifi}, ``Efficient {{Coverage
  Path Planning}} for {{Mobile Disinfecting Robots Using Graph-Based
  Representation}} of {{Environment}},'' \emph{Frontiers in Robotics and AI},
  vol.~8, p.~4, 2021.

\bibitem{dasMappingPlanningSample2014}
A.~Das, M.~Diu, N.~Mathew, C.~Scharfenberger, J.~Servos, A.~Wong, J.~S. Zelek,
  D.~A. Clausi, and S.~L. Waslander, ``Mapping, {{Planning}}, and {{Sample
  Detection Strategies}} for {{Autonomous Exploration}},'' \emph{Journal of
  Field Robotics}, vol.~31, no.~1, pp. 75--106, 2014.

\bibitem{arkinOptimalCoveringTours2005}
E.~M. Arkin, M.~A. Bender, E.~D. Demaine, S.~P. Fekete, J.~S.~B. Mitchell, and
  S.~Sethia, ``Optimal {{Covering Tours}} with {{Turn Costs}},'' \emph{SIAM
  Journal on Computing}, vol.~35, no.~3, pp. 531--566, 2005.

\bibitem{chosetCoveragePathPlanning1998}
H.~Choset and P.~Pignon, ``Coverage {{Path Planning}}: {{The Boustrophedon
  Cellular Decomposition}},'' \emph{Field and Service Robotics}, pp. 203--209,
  1998.

\bibitem{vandermeulenTurnminimizingMultirobotCoverage2019}
I.~Vandermeulen, R.~Gro{\ss}, and A.~Kolling, ``Turn-minimizing multirobot
  coverage,'' in \emph{2019 {{IEEE International Conference}} on {{Robotics}}
  and {{Automation}} ({{ICRA}})}, 2019, pp. 1014--1020.

\bibitem{bormannIndoorCoveragePath2018}
R.~Bormann, F.~Jordan, J.~Hampp, and M.~H{\"a}gele, ``Indoor {{Coverage Path
  Planning}}: {{Survey}}, {{Implementation}}, {{Analysis}},'' in \emph{2018
  {{IEEE International Conference}} on {{Robotics}} and {{Automation}}
  ({{ICRA}})}, May 2018, pp. 1718--1725.

\bibitem{bochkarevMinimizingTurnsRobot2016}
S.~Bochkarev and S.~L. Smith, ``On minimizing turns in robot coverage path
  planning,'' in \emph{2016 {{IEEE International Conference}} on {{Automation
  Science}} and {{Engineering}} ({{CASE}})}, 2016, pp. 1237--1242.

\bibitem{bahnemannRevisitingBoustrophedonCoverage2021}
R.~B{\"a}hnemann, N.~Lawrance, J.~J. Chung, M.~Pantic, R.~Siegwart, and
  J.~Nieto, ``Revisiting {{Boustrophedon Coverage Path Planning}} as a
  {{Generalized Traveling Salesman Problem}},'' in \emph{Field and {{Service
  Robotics}}}, ser. Springer {{Proceedings}} in {{Advanced Robotics}}.\hskip
  1em plus 0.5em minus 0.4em\relax {Singapore}: {Springer}, 2021, pp. 277--290.

\bibitem{torresCoveragePathPlanning2016}
M.~Torres, D.~A. Pelta, J.~L. Verdegay, and J.~C. Torres, ``Coverage path
  planning with unmanned aerial vehicles for {{3D}} terrain reconstruction,''
  \emph{Expert Systems with Applications}, vol.~55, pp. 441--451, 2016.

\bibitem{j.jinOptimalCoveragePath2010}
{J. Jin} and {L. Tang}, ``Optimal {{Coverage Path Planning}} for {{Arable
  Farming}} on {{2D Surfaces}},'' \emph{Transactions of the ASABE}, vol.~53,
  no.~1, pp. 283--295, 2010.

\bibitem{cabreiraGridBasedCoveragePath2019}
T.~M. Cabreira, P.~R. Ferreira, C.~D. Franco, and G.~C. Buttazzo,
  ``Grid-{{Based Coverage Path Planning With Minimum Energy Over
  Irregular-Shaped Areas With UAVs}},'' in \emph{2019 {{International
  Conference}} on {{Unmanned Aircraft Systems}} ({{ICUAS}})}, 2019, pp.
  758--767.

\bibitem{kanOnlineExplorationCoverage2020}
X.~Kan, H.~Teng, and K.~Karydis, ``Online {{Exploration}} and {{Coverage
  Planning}} in {{Unknown Obstacle-Cluttered Environments}},'' \emph{IEEE
  Robotics and Automation Letters}, vol.~5, no.~4, pp. 5969--5976, 2020.

\bibitem{gabrielySpanningtreeBasedCoverage2001}
Y.~Gabriely and E.~Rimon, ``Spanning-tree based coverage of continuous areas by
  a mobile robot,'' \emph{Annals of Mathematics and Artificial Intelligence},
  vol.~31, no. 1-4, pp. 77--98, 2001.

\bibitem{theileUAVCoveragePath2020a}
M.~Theile, H.~Bayerlein, R.~Nai, D.~Gesbert, and M.~Caccamo, ``{{UAV Coverage
  Path Planning}} under {{Varying Power Constraints}} using {{Deep
  Reinforcement Learning}},'' in \emph{{{IEEE}}/{{RSJ International
  Conference}} on {{Intelligent Robots}} and {{Systems}} ({{IROS}})}, 2020, pp.
  1444--1449.

\bibitem{kyawCoveragePathPlanning2020}
P.~T. Kyaw, A.~Paing, T.~T. Thu, R.~E. Mohan, A.~Vu~Le, and
  P.~Veerajagadheswar, ``Coverage {{Path Planning}} for {{Decomposition
  Reconfigurable Grid-Maps Using Deep Reinforcement Learning Based Travelling
  Salesman Problem}},'' \emph{IEEE Access}, vol.~8, pp. 225\,945--225\,956,
  2020.

\bibitem{apuroopReinforcementLearningBasedComplete2021a}
K.~G.~S. Apuroop, A.~V. Le, M.~R. Elara, and B.~J. Sheu, ``Reinforcement
  {{Learning-Based Complete Area Coverage Path Planning}} for a {{Modified
  hTrihex Robot}},'' \emph{Sensors}, vol.~21, no.~4, p. 1067, 2021.

\bibitem{krishnalakshmananCompleteCoveragePath2020}
A.~Krishna~Lakshmanan, R.~Elara~Mohan, B.~Ramalingam, A.~Vu~Le,
  P.~Veerajagadeshwar, K.~Tiwari, and M.~Ilyas, ``Complete coverage path
  planning using reinforcement learning for {{Tetromino}} based cleaning and
  maintenance robot,'' \emph{Automation in Construction}, vol. 112, p. 103078,
  2020.

\bibitem{nemhauserIntegerCombinatorialOptimization1999}
G.~L. Nemhauser, \emph{Integer and Combinatorial Optimization}, ser.
  Wiley-{{Interscience}} Series in Discrete Mathematics and Optimization.\hskip
  1em plus 0.5em minus 0.4em\relax {New York}: {Wiley}, 1999.

\bibitem{karmarkarNewPolynomialtimeAlgorithm1984}
N.~Karmarkar, ``A new polynomial-time algorithm for linear programming,''
  \emph{Combinatorica}, vol.~4, no.~4, pp. 373--395, 1984.

\bibitem{pitsoulisRepresentabilityTotallyUnimodular2009}
L.~Pitsoulis, K.~Papalamprou, G.~Appa, and B.~Kotnyek, ``On the
  representability of totally unimodular matrices on bidirected graphs,''
  \emph{Discrete Mathematics}, vol. 309, no.~16, pp. 5024--5042, 2009.

\bibitem{hoffmanIntegralBoundaryPoints2010}
A.~J. Hoffman and J.~B. Kruskal, ``Integral {{Boundary Points}} of {{Convex
  Polyhedra}},'' in \emph{50 {{Years}} of {{Integer Programming}} 1958-2008:
  {{From}} the {{Early Years}} to the {{State-of-the-Art}}}.\hskip 1em plus
  0.5em minus 0.4em\relax {Berlin, Heidelberg}: {Springer}, 2010, pp. 49--76.

\bibitem{smithGLNSEffectiveLarge2017}
S.~L. Smith and F.~Imeson, ``{{GLNS}}: {{An}} effective large neighborhood
  search heuristic for the {{Generalized Traveling Salesman Problem}},''
  \emph{Computers \& Operations Research}, vol.~87, pp. 1--19, 2017.

\bibitem{noonEfficientTransformationGeneralized1993a}
C.~E. Noon and J.~C. Bean, ``An {{Efficient Transformation Of The Generalized
  Traveling Salesman Problem}},'' \emph{INFOR: Information Systems and
  Operational Research}, vol.~31, no.~1, pp. 39--44, 1993.

\bibitem{obermeyerVisiLibityLibraryVisibility2008}
K.~J. Obermeyer and {Contributors}, ``{{VisiLibity}}: {{A C}}++ {{Library}} for
  {{Visibility Computations}} in {{Planar Polygonal Environments}},'' 2008.

\bibitem{makhorinGLPKGNUProject}
A.~O. Makhorin, ``{{GLPK}} - {{GNU Project}} - {{Free Software Foundation}}
  ({{FSF}}),'' https://www.gnu.org/software/glpk/.

\end{thebibliography}

\end{document}